\acrodef{GNC}[GNC]{Graduated-Nonconvexity}
\acrodef{WLS}[WLS]{Weighted Least Squares}
\acrodef{TLS}[TLS]{Truncated Least Squares}
\acrodef{SVD}[SVD]{Singular Value Decomposition}
\acrodef{VO}[VO]{Visual Odometry}
\acrodef{VIO}[VIO]{Visual-Inertial Odometry}
\acrodef{SDF}[SDF]{Signed Distance Field}
\acrodef{SDFs}[SDFs]{Signed Distance Fields}
\acrodef{ESDF}[ESDF]{Euclidean Signed Distance Field}
\acrodef{CESDF}[C-ESDF]{Certified \ac{ESDF}}
\acrodef{TSDF}[TSDF]{Truncated Signed Distance Field}
\acrodef{RGBD}[RGBD]{RGB-Depth}
\acrodef{IMU}[IMU]{Inertial Measurement Unit}
\acrodef{EKF}[EKF]{Extended Kalman Filter}
\acrodef{VO}[VO]{Visual Odometry}
\acrodef{CVO}[C-VO]{Certified Visual Odometry}
\acrodef{ISS}[ISS]{Input-to-State}
\acrodef{SLAM}[SLAM]{Simultaneous Localization and Mapping}
\acrodef{SFC}[SFC]{Safe Flight Corridors}
\acrodef{FOV}[FoV]{field of view}
\acrodef{UAV}[UAV]{}
\newcommand{\naturals}{\mathbb{N}}
\newcommand{\reals}{\mathbb{R}}
\newcommand{\R}{\reals}
\newcommand{\Rnonneg}{\reals_{\geq 0}}
\newcommand{\Bcal}{\mathcal{B}}
\newcommand{\Ccal}{\mathcal{C}}
\newcommand{\Ecal}{\mathcal{E}}
\newcommand{\Fcal}{\mathcal{F}}
\newcommand{\Gcal}{\mathcal{G}}
\newcommand{\Ical}{\mathcal{I}}
\newcommand{\Ncal}{\mathcal{N}}
\newcommand{\Pcal}{\mathcal{P}}
\newcommand{\Rcal}{\mathcal{R}}
\newcommand{\Scal}{\mathcal{S}}
\newcommand{\Tcal}{\mathcal{T}}
\newcommand{\Ucal}{\mathcal{U}}
\newcommand{\Vcal}{\mathcal{V}}
\newcommand{\Wcal}{\mathcal{W}}
\newcommand{\Xcal}{\mathcal{X}}
\newcommand{\eqn}[1]{\begin{align} #1 \end{align}}
\newcommand{\eqnN}[1]{\begin{align*} #1 \end{align*}}
\newcommand{\norm}[1]{\left\Vert #1 \right \Vert}
\newcommand{\gatekeeper}{\texttt{gatekeeper}}
\newcommand{\reroot}{\texttt{ReRoot}}
\theoremstyle{plain}
\newtheorem{theorem}{Theorem}
\newtheorem{problem}{Problem}
\newtheorem{definition}{Definition}
\theoremstyle{definition}
\newtheorem{assumption}{Assumption}
\newtheorem{remark}{Remark}
\DeclareMathOperator*{\argmin}{\arg\!\min}
\theoremstyle{remark}
\let\NAT@parse\undefined
\title{\LARGE \bf Autonomy Architectures for Safe Planning in Unknown Environments Under Budget Constraints}
\author{Daniel M. Cherenson, Devansh R. Agrawal, and Dimitra Panagou 
\thanks{This research was supported by the Center for Autonomous Air Mobility and Sensing (CAAMS), an NSF IUCRC, under Award Number 2137195, and an NSF CAREER under Award Number 1942907.}
\thanks{Daniel Cherenson is with the Department of Robotics, University of Michigan, Ann Arbor, MI 48109 USA {\tt\footnotesize dmrc@umich.edu}}
\thanks{Devansh Agrawal is with the Department of Aerospace Engineering, University of Michigan, Ann Arbor, MI 48109 USA {\tt\footnotesize devansh@umich.edu}}
\thanks{Dimitra Panagou is with the Department of Robotics and Department of Aerospace Engineering, University of Michigan, Ann Arbor, MI 48109 USA {\tt\footnotesize dpanagou@umich.edu}}%
\thanks{$^{*}$Correspondence: {\tt\small dmrc@umich.edu} }
}
\author{Author Names Omitted for Anonymous Review. Paper-ID}
\thanks{$^{*}$[redacted]}
\begin{document}

\maketitle

\begin{abstract}
Mission planning can often be formulated as a constrained control problem under multiple path constraints (i.e., safety constraints) and budget constraints (i.e., resource expenditure constraints). In \textit{a priori} unknown environments, verifying that an offline solution will satisfy the constraints for all time can be difficult, if not impossible. We present \reroot{}, a novel sampling-based framework that enforces safety and budget constraints for nonlinear systems in unknown environments. The main idea is that \reroot{} grows multiple reverse RRT* trees online, starting from renewal sets, i.e., sets where the budget constraints are renewed. The dynamically feasible backup trajectories guarantee safety and reduce resource expenditure, which provides a principled backup policy when integrated into the \gatekeeper{} safety verification architecture. We demonstrate our approach in simulation with a fixed-wing UAV in a GNSS-denied environment with a budget constraint on localization error that can be renewed at visual landmarks. [\href{https://github.com/dcherenson/budget-constrained-planning}{Code}]
\end{abstract}

\section{Introduction}
\label{sec:introduction}

Robotic and autonomous systems often operate with limited knowledge of their environment and limited sensing capabilities, which poses challenges to guaranteeing their safe operation. Safety requirements are typically given in the form of a set of states in which the robot must always remain, called the safe set. In unknown environments, the safe set is not fully known \textit{a priori} and must be built online with information from the sensor measurements. 

In addition to instantaneous safety requirements, many robotic systems have a limited budget on resources or quantities that are depleted throughout a mission. Constraints on finite resources have been studied under various names, including integral constraints~\cite{kumar2010efficient}, budget constraints~\cite{takei2015optimal,tsiogkas2018dcop}, and cost constraints~\cite{yang2021uav}. Previously studied examples include constraints on battery charge~\cite{naveed2024eclares}, localization error~\cite{bopardikar2014multi}, or time spent visible to an enemy observer~\cite{gilles2020evasive}. We consider a generalized case of the budget-constrained, safety-critical planning problem where resources can be renewed at specific regions in the state space.

Various offline methods to enforce budget constraints have been introduced. The methods in~\cite{kumar2010efficient, takei2015optimal} solve trajectory-optimization problems with renewable budget constraints. However, these approaches rely on searching over large-scale graphs or solving partial differential equations derived from dynamic programming, which is computationally expensive and better suited for offline planning in known environments. An offline reinforcement-learning approach is studied in  \cite{lin2023safe}, where budget constraints for high-dimensional, nonlinear systems are incorporated in the training process, however no strict guarantees of constraint satisfaction are obtained.

In this paper, we instead focus on the problem of online safety and budget constraint satisfaction, where lack of knowledge of the environment requires constant replanning and infinite horizon guarantees. Previous work on online budget constraint satisfaction used simplified robot and budget dynamics~\cite{notomista2018persistification,yang2021uav}. The solutions were designed for specific use cases, limiting their general applicability. 

Our approach relies on a run-time safety filter. Many safety-critical robotic systems apply these filters with a backup or recovery policy that reaches an invariant backup set to guarantee safety over an infinite horizon~\cite{hobbs2023runtime}. For systems with instantaneous safety constraints, finding a backup set is often trivial, as most robots can stop in place or return to a known safe configuration. Typical backup policies include braking maneuvers, model predictive control approaches, and learned RL policies~\cite{kim2021backup,thananjeyan2021recovery,kiemel2024safe,jung2025contigency}. However, to guarantee budget constraint satisfaction, the design of backup policies is more challenging because the budget must be renewed in the backup set, which may only occur in limited subsets of the state space. Additionally, the trajectories to reach the backup sets must satisfy all safety constraints, which forms the backup trajectory generation problem.

To the best of the authors' knowledge, there are no solutions that explicitly guarantee safety and budget constraint satisfaction with limited environmental knowledge and general, non-trivial budget expenditure dynamics. This paper aims to narrow this gap by proposing a recursively feasible algorithm for the online trajectory generation of nonlinear systems under budget and safety constraints in unknown environments. The key idea and contribution is the sampling-based method called \reroot{}, which efficiently generates backup trajectories to minimize resource expenditure while ensuring that the system reaches budget-renewal sets, where resources are replenished. The algorithm is demonstrated on a UAV that uses visual odometry to reach a goal in an unknown, GNSS-denied environment, with guaranteed satisfaction of budget and safety constraints.

The paper is organized as follows. In Section \ref{section:problem_formulation}, we introduce the problem formulation. Section \ref{section:method} describes the proposed method, highlighting the safe trajectory planner and \reroot{}. Section \ref{section:experiments} discusses a simulation case study of a UAV in a GNSS-denied environment navigating to a goal with safety and budget constraints related to visual odometry.

\section{Problem Formulation}
\label{section:problem_formulation}
We represent the robot motion by the nonlinear dynamics:
\eqn{\dot{x} = f(x,u), \label{eq:dynamics}} 
where $x \in \Xcal \subset \R^n$ is the state and $u \in \Ucal \subset \R^m$ is the control input, and $f : \Xcal \times \Ucal \to \R^n$ a locally Lipschitz function. 
Under a (locally Lipschitz) feedback controller $\pi : \Xcal \to \Ucal$ and an initial condition $x(t_0) = x_0 \in \Xcal$, the closed-loop system dynamics read: \eqn{\dot{x} = f(x,\pi(x)), ~x(t_0)=x_0.\label{eq:closed_loop}}

We consider a resource that is subject to a budget constraint, and which can be renewed in a subset $\Rcal$ of the state space $\Xcal$. The budget state $b$ is governed by the following hybrid dynamical system: \begin{equation}
\begin{aligned}
\begin{cases}
\dot{b} = L(x,u), &\quad x \notin \Rcal \\
b^+ = b_{\textup{reset}}, &\quad x \in \Rcal
\end{cases}
&\qquad
b(t_0) = b_0, 
\end{aligned}
\label{eq:budget_hybrid}
\end{equation}
 where $L : \Xcal \times \Ucal \to \Rnonneg$ is piecewise continuous and $b^+ = b_{\textup{reset}} \ge 0$ is the reset value of the budget state after the jump. We assume that $L$ is known, but $\Rcal$ is \textit{not} fully known.
We aim to satisfy the following constraints:

\begin{equation}
    x(t) \in \Scal \land b(t) \le B, \quad \forall t \ge t_0,
    \label{eq:cons}
\end{equation}

where $\Scal \subset \Xcal$ is the set of states that satisfy safety requirements and $B$ is the budget constraint that must not be exceeded. We assume that $B > b_{\textup{reset}}$, that $\Scal$ has a nonempty interior, and that it is not fully known, but can be sensed online from sensor measurements. We introduce the known subset of the safe set at time $t_k$, $\Fcal_k \subset \Scal$, where $k\in\naturals$. Similarly, we denote the known renewal set as $\Rcal_k \subset \Rcal \subset \Scal$, where we have assumed the renewal set is safe.

At time $t_{k+1}$, new information is acquired by the robot to update $\Fcal_k$ and $\Rcal_k$ such that the following relations hold: \eqn{\Fcal_k \subset \Fcal_{k+1},~\Rcal_k \subset \Rcal_{k+1}, \quad \forall k \in \naturals 
.}

Moreover, $\Rcal_k$ can be composed of $N_C\ge1$ disjoint, compact subsets. At time $t_k$, we denote the $i$-th subset as $\Rcal^i_k$ for $i \in \{1,\dotsc,N_C\}$ such that $\Rcal_k =\cup_{i=1}^{N_C}\Rcal^i_k$.

Our method relies on concepts in set invariance to guarantee constraint satisfaction \cite{blanchini1999set}. 
\begin{definition}[Controlled Invariant Set]
A set $\Ccal \subset \Xcal$ is controlled invariant for the system \eqref{eq:dynamics}, if there exists a controller $\pi : \Xcal \to \Ucal$ that assures the existence of a unique solution to the closed-loop system \eqref{eq:closed_loop} such that $\Ccal$ is positively invariant for the closed-loop system \eqref{eq:closed_loop}, i.e., $\forall x(t_0) \in \Ccal,  x(t) \in \Ccal ,\forall t \ge t_0.$
\end{definition}

\begin{definition}[Backup Set]  \label{def:backup_set}
    A set $\Bcal \subset \Xcal$ is a \textbf{backup set} if there exists a controller $\pi^B : \Xcal \to \Ucal$ such that for the closed-loop system \eqref{eq:closed_loop}, $\forall~x(t_0)$ in a neighborhood $\Ncal \subset \Xcal$, there exists a finite time $0\leq T_B(x(t_0))< \infty$ such that $\Bcal$ is reachable at $t_0+T_B$ and controlled invariant thereafter, i.e., $x(t) \in \Bcal,~ \forall t \ge t_0+T_B$.
\end{definition}

\begin{assumption}\label{assumption:budget_reset} We assume that the budget renewal set $\Rcal_k$ is a backup set for the closed loop system \eqref{eq:closed_loop} for all $k\in\naturals$. Then, from \eqref{eq:budget_hybrid}, the budget constraint is satisfied for all time in $\Rcal_k$: \eqn{x(t) \in \Rcal_k \implies b(t) = b_{\textup{reset}} \le B, \quad \forall t\ge t_k.} 
\end{assumption}
\begin{definition}[Trajectory]
    A \textbf{trajectory} with horizon $T_H$ is a pair of functions $p : \Tcal \to \Xcal$ and $u : \Tcal \to \Ucal$ defined on $\Tcal = [t_0,t_0+T_H]\subset \Rnonneg$ that satisfy \eqn{\dot p(t) = f(p(t),u(t)) \quad \forall t \in \Tcal.} The set of all trajectories at $t \in \R$ starting from $x \in \Xcal$ is \eqn{\Phi(t,x) = \{ (p,u) ~|~p(t) = x \land (p,u) \text{ is a trajectory}\}.}
\end{definition}
We assume there exists a nominal planner to generate trajectories that aim to fulfill the mission objectives. 
\begin{definition}[Nominal Trajectory]
    
At time $t_k \in \R$ at state $x_k \in \Xcal$, the planner generates the \textbf{nominal trajectory} $(p_k^{\textup{nom}} , u_k^{\rm nom}) \in \Phi(t_k,x_k)$ defined over the interval $\Tcal =[t_k,t_k+T_H]$.
\end{definition}

\begin{figure*}
    \centering
        \begin{minipage}[t]{0.4\linewidth}
        \centering
        \includegraphics[width=\linewidth]{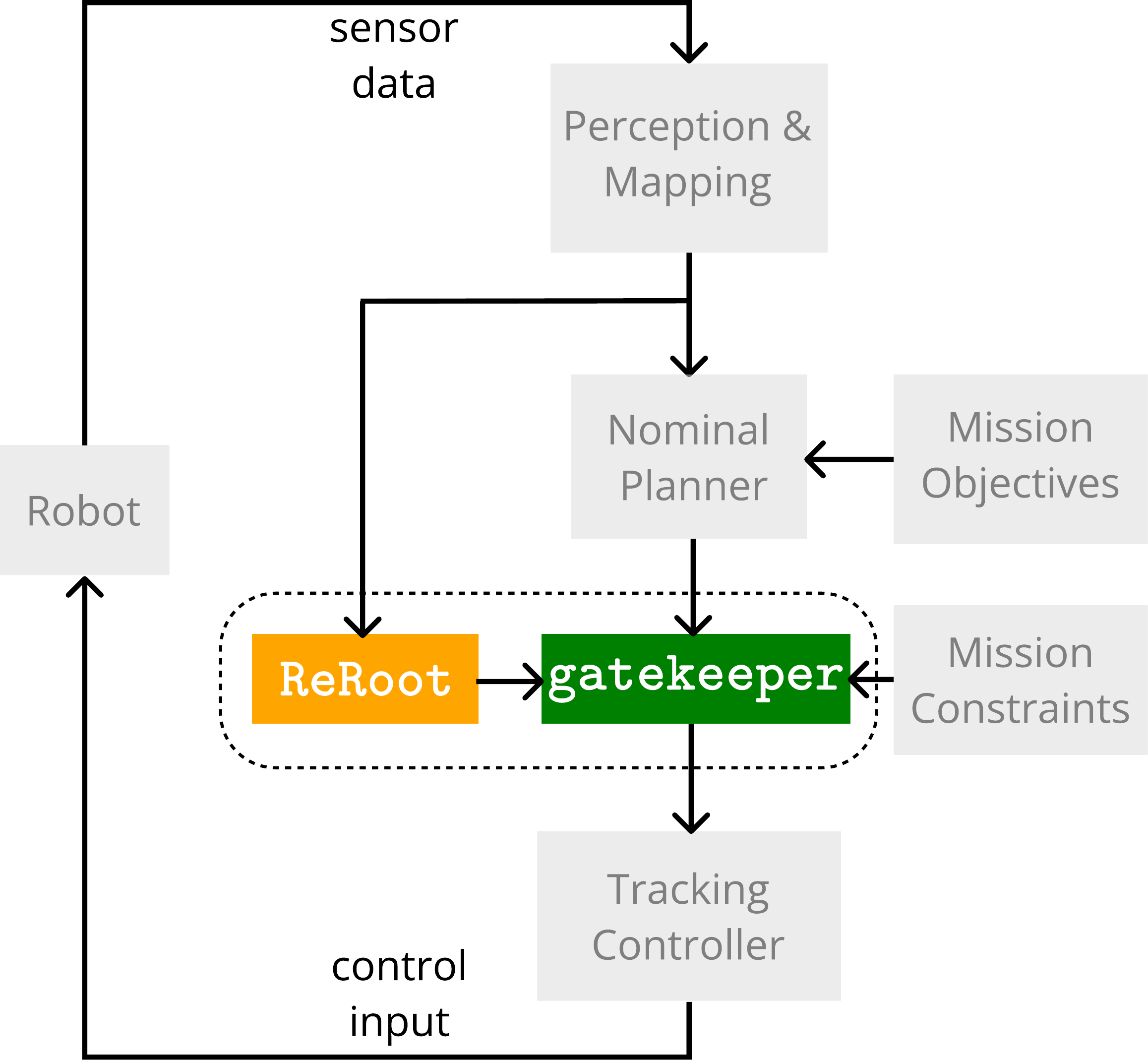}
        \caption{Block diagram of our proposed layered autonomy architecture, with our module \gatekeeper{} + \reroot{} highlighted in the dashed line.
        }
    \label{fig:block_diagram}
    \end{minipage}%
    \hfill 
    \begin{minipage}[t]{0.57\linewidth}
        \centering
        \includegraphics[width=\linewidth]{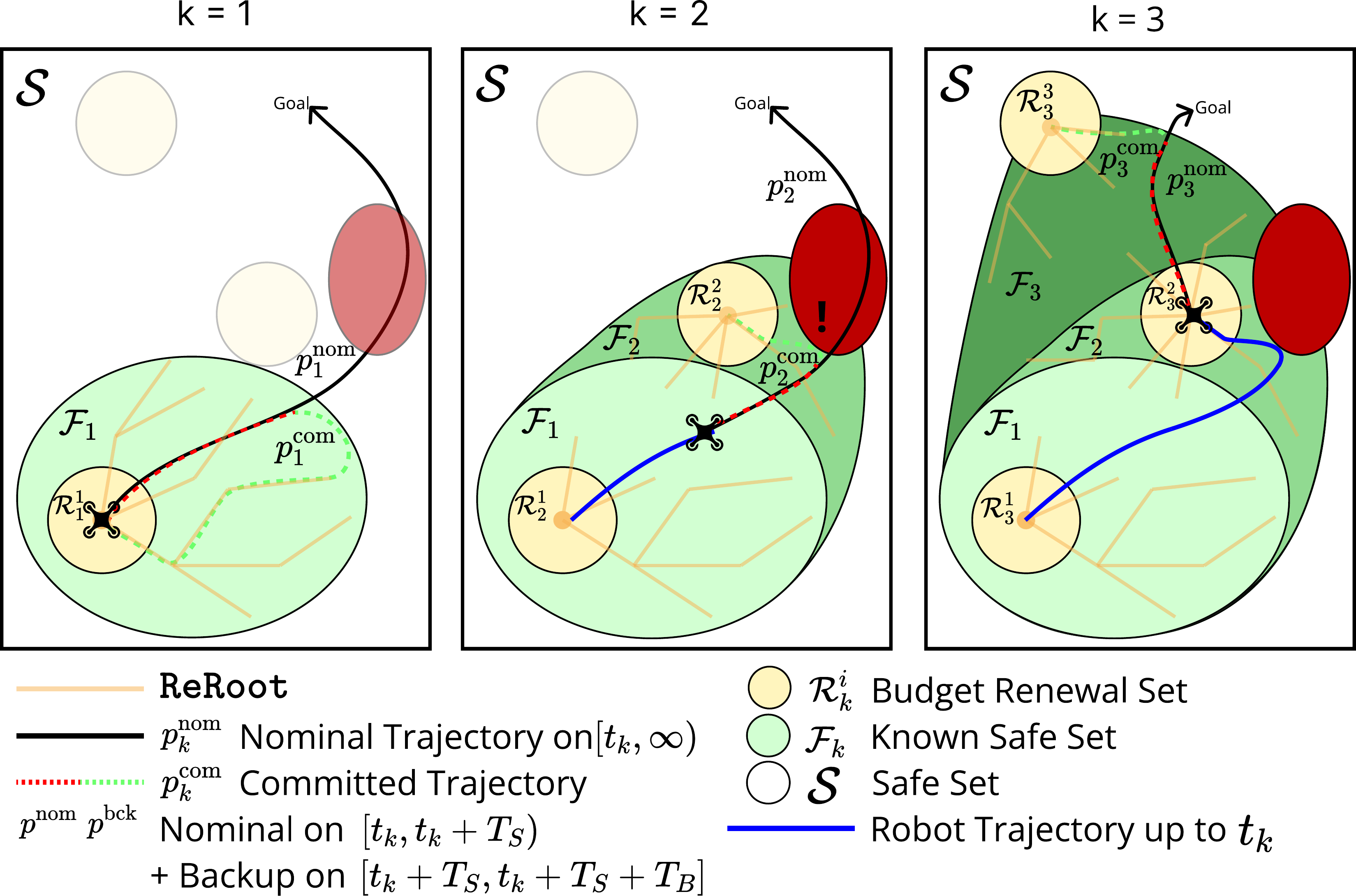}
        \caption{Snapshots of \gatekeeper{} with \reroot{} trees rooted at budget renewal sets. The trees are grown in the free set $\Fcal_k$. The robot discovers new budget renewal sets as it uncovers more of the unknown space.}
        \label{fig:backup_rrt}
    \end{minipage}   
\end{figure*}
With the above preliminaries, we state the main problem:
\begin{problem}
\label{problem:main_prob}
    Given the known safe set $\Fcal_k$ and budget renewal set $\Rcal_k$ updated at each $k\in\naturals$, design a safe planning algorithm that recursively constructs a trajectory $p(t)$ that satisfies all safety and budget constraints, i.e., $p(t) \in \Scal$ and $b(t) \le B$ for all $t \ge t_0$.
\end{problem}

\section{Method Overview}
\label{section:method}
To address Problem \ref{problem:main_prob}, we propose a framework that recursively generates trajectories that are guaranteed to satisfy the safety and budget constraints over an infinite horizon. The solution has two components: A)~\gatekeeper{}, an algorithm to construct and validate safe trajectories using the nominal trajectory and a backup trajectory, and B)~\reroot{}, a sampling-based algorithm that allows for efficient online generation of backup trajectories. See \Cref{fig:block_diagram} for a block diagram showing where the \gatekeeper{} and \reroot{} components fit into the general autonomy stack.

\subsection{Guaranteed Constraint Satisfaction}

Here, we list the key steps of our method, and then we provide mathematical definitions of each type of trajectory along with a proof of correctness. We use the \gatekeeper{} framework~\cite{agrawal2024gatekeeper,agrawal2025online}, which recursively filters the output of the nominal planner by constructing a trajectory that is guaranteed to remain in the safe set for all time. We make use of the reformulated budget dynamics in \eqref{eq:budget_hybrid} in the \gatekeeper{} framework to augment the definition of the safe set to include the budget constraint. \Cref{alg:gk} details the $k$-th iteration, where $k\in\naturals\setminus0$. 
\begin{algorithm}[h]
\small
    \DontPrintSemicolon
\caption{\gatekeeper{} with \reroot}
    \label{alg:gk}

$T_S \gets T_H$\;
\While{$T_S \ge 0$}{
    $p_k^{\textup{can},T_S}, b^{\textup{nom}} \gets \FuncSty{propagate}(p^{\textup{nom}}_k, x(t_k), [t_k, t_k + T_S])$\;

    $p^{\textup{bck,T_S}}_k, b^{bck} \gets \FuncSty{GetBackupFromReRoot}(\Gcal_k, p_k^{\textup{can},T_S}(t_k+T_S),\Fcal_k)$\; \label{line:call_reroot}

    $p^{\textup{can},T_S}_k \gets \FuncSty{append}(p^{\textup{can},T_S}_k, p^{\textup{bck,T_S}}_k)$\; 
    $b^{-} \gets b^{\textup{nom}} + b^{\textup{bck}}$\;
    \If{$p_k^{\textup{can},T_S}$ is valid by \Cref{def:valid}}{
        $p_k^{\textup{com}} = p_k^{\textup{can},T_S}$\;
    \Return{$p^{\textup{com}}_k$}\;
    }
    $T_S \gets T_S - \Delta T$\;
}
$p_k^{\textup{com}} = p_{k-1}^{\textup{com}} $\;
\Return{$p^{\textup{com}}_k$}\;
\end{algorithm}

We also summarize the steps below:
\begin{itemize}
    \item For all $T_S \in [0,T_H]$, propagate the nominal trajectory $p_k^{\textup{nom}}$ (black line in \Cref{fig:backup_rrt}) on the interval $[t_k,t_k + T_S]$, then generate a backup trajectory $p^{\textup{bck},T_S}_k$ that reaches $\Rcal_k$ with \reroot{}, which forms a set of \textit{candidate trajectories}.
    \item For each candidate trajectory, compute the budget state time history $b(t)$ with \eqref{eq:budget_hybrid} and verify that constraints \eqref{eq:cons} are satisfied, forming a set of \textit{valid trajectories}. In \Cref{fig:backup_rrt} when $k=2$, candidate trajectories that enter the red unsafe area are deemed invalid.
    \item Select the valid trajectory that maximizes $T_S$ as the \textit{committed trajectory} $p_k^{\textup{com}}$. In \Cref{fig:backup_rrt}, the committed trajectory is composed of a red nominal portion and a green backup portion, which is generated from \reroot{} and will be detailed in the following subsections. If there are no valid trajectories, select the $(k-1)$-th committed trajectory $p^{\textup{com}}_{k-1}$. This step guarantees recursive feasibility.
\end{itemize}

Overall, \Cref{fig:backup_rrt} depicts the \gatekeeper{} trajectories and \reroot{} growth as the robot expands the known budget renewal subsets $\Rcal_k^i$ during the mission.

To formally prove that all committed trajectories are guaranteed to satisfy all constraints for all time, we need to introduce the notion of a backup trajectory.
\begin{definition}[Backup Trajectory]
    A trajectory $(p^{\textup{bck},T_S}_k, u^{\textup{bck},T_S}_k) \in \Phi(t_k + T_S,x(t_k+T_S))$ over the interval $\Tcal = [t_k+T_S,\infty)$ is a \textbf{backup trajectory} to a set $\Bcal \subset \Xcal$ if the following two conditions hold: \eqn{p^{\textup{bck},T_S}_k(t_k+T_S+T_B) \in \Bcal,} \eqn{u^{\textup{bck},T_S}_k(t) = \pi^B(p^{\textup{bck},T_S}_k(t)) \quad \forall t \ge t_k+T_S+T_B.}
\end{definition}
The backup trajectory is dynamically feasible and enters a backup set within $T_B$ seconds after the switching time $T_S$. After entering $\Bcal$, the trajectory remains in $\Bcal$ for all time because the control policy $\pi^B$ renders $\Bcal$ controlled-invariant.

Note that the backup trajectory is not required to satisfy constraints \eqref{eq:cons} -- the validation step of the candidate trajectories will filter out unsafe backup trajectories. In general, finding a backup trajectory from any initial condition to a backup set is a key step in the \gatekeeper{} framework, which is covered in the next subsection on \reroot{}.

A candidate trajectory follows the nominal trajectory then switches to a backup trajectory.
\begin{definition}[Candidate Trajectory]
\label{def:can}
    At time $t_k \in \R$ and starting from $x_k \in \Xcal$, let the nominal trajectory be $(p_k^{\textup{nom}} , u_k^{\rm nom}) \in \Phi(t_k,x_k)$. Let $t_{kS} = t_k+T_S$ and $\Tcal_S = [t_k,t_{kS})$. For any switching time $T_S \in [0,T_H]$, the backup trajectory is $(p_k^{\textup{nom}}, u_k^{\rm nom}) \in \Phi(t_{kS},p_k^{\rm nom}(t_{kS}))$. A \textbf{candidate trajectory} $(p_k^{\textup{can}} , u_k^{\rm can}) \in \Phi(t_k,x_k)$ with switching time $T_S$ is defined as \eqn{(p_k^{\textup{can}}(t), u_k^{\rm can}(t)) = \begin{cases} (p_k^{\textup{nom}}(t) , u_k^{\rm nom}(t))~ & \text{if}~ t \in \Tcal_S,
        \\ (p_k^{\textup{bck},T_S}(t) , u_k^{\textup{bck},T_S}(t)) ~ &\text{if} ~t \ge t_{kS}.
    \end{cases}}

\end{definition}

A candidate trajectory is valid if it satisfies all constraints at every point along the trajectory and reaches a budget renewal set in finite time. 

\begin{definition}[Valid Trajectory]
    \label{def:valid}
    A candidate trajectory is \textbf{valid} if it remains in the \textit{known} safe set:
    \eqn{p_k^{\textup{can},T_S}(t)\in\Fcal_k \quad \forall t \in [t_k,t_{kB}],} the solution to \eqref{eq:budget_hybrid} remains below the budget $B$: \eqn{b(t) \le B \quad \forall t \in [t_k,t_{kB}],} and the trajectory reaches $\Rcal_k$: \eqn{p_k^{\textup{can},T_S}(t_{kB}) \in \Rcal_k.}
\end{definition}
Finally, the committed trajectory is chosen to be the valid trajectory that maximizes $T_S$. If no committed trajectory can be found, the previously committed trajectory is followed.
\begin{definition}[Committed Trajectory]
\label{def:com}
    At iteration $k$, the set of valid candidate trajectories parameterized by $T_S$ is \eqn{\Ical_k = \{T_S\in[0,T_H]~|~ (p_k^{\textup{can},T_S},u_k^{\textup{can},T_S}) \textup{ is valid (Def. \ref{def:valid})} \}.}    
    If $\Ical_k \ne \emptyset$, let $T_S^* = \max \Ical_k$. The \textbf{committed trajectory} is \eqn{p_k^{\textup{com}}(t) = p_k^{\textup{can},T_S^*}(t), \quad t \in [t_k,\infty).}
    If $\Ical_k = \emptyset$, the \textbf{committed trajectory} is \eqn{p_k^{\textup{com}}(t) = p_{k-1}^{\textup{com}}(t), \quad t \in [t_k,\infty).}
\end{definition}
{The following assumption is required for the inductive recursive feasibility property of \gatekeeper{}.
\begin{assumption}
    We are given an initial dynamically feasible valid candidate trajectory $p_0^{\textup{can},T_S}$ which by definition satisfies all safety and budget constraints.\footnote{This assumption is not restrictive to satisfy in practice. The initial candidate trajectory can be arbitrarily short and planned within the known safe set, as long as it ends in a backup set. For example, in our case study in \Cref{section:experiments}, the initial candidate trajectory is a short out-and-back loop that stays within the initial field of view of the UAV and returns to the controlled-invariant orbit.}
\end{assumption}

Now we can state the following theorem which shows that \gatekeeper{} guarantees constraint satisfaction for all time.
\begin{theorem}
\label{theorem}
    Suppose $p_0^{\textup{can},T_S}$ is a dynamically feasible candidate trajectory defined on $[t_0,\infty)$ that is valid by Def.~\ref{def:valid} for some $T_S\ge0$. If $p_k^{\textup{com}}$ is determined via Def.~\ref{def:com}, then \eqnN{p_k^{\textup{com}}(t)\in\Scal \land b(t) \le B \quad \forall t\in [t_k,\infty)
    .}

\end{theorem}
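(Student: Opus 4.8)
The plan is to argue by induction on the iteration index $k\in\naturals$, with the inductive invariant being exactly the desired conclusion: that $p_k^{\textup{com}}$, together with the budget signal $b$ obtained by integrating \eqref{eq:budget_hybrid} along it from $b(t_k)$, satisfies $p_k^{\textup{com}}(t)\in\Scal$ and $b(t)\le B$ for all $t\in[t_k,\infty)$. The base case $k=0$ is immediate: by the standing assumption $p_0^{\textup{can},T_S}$ is valid (Def.~\ref{def:valid}), hence $p_0^{\textup{com}}=p_0^{\textup{can},T_S}$, and the claim for it follows from the lemma below. All the content is in the inductive step.

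Before the induction I would isolate the one substantive fact: \emph{every valid candidate trajectory satisfies the true constraints on all of $[t_k,\infty)$, not just on the finite window $[t_k,t_{kB}]$ checked by Def.~\ref{def:valid}}. To see this, split the trajectory's lifetime at the arrival time $t_{kB}=t_k+T_S+T_B$ at which it enters $\Rcal_k$. On $[t_k,t_{kB}]$, validity directly gives $p_k^{\textup{can},T_S}(t)\in\Fcal_k$ and $b(t)\le B$, and $\Fcal_k\subset\Scal$ upgrades the first inclusion to the true safe set. On $[t_{kB},\infty)$ the candidate coincides with the backup trajectory, whose input equals $\pi^B$; since $\Rcal_k$ is a backup set by \Cref{assumption:budget_reset}, $\pi^B$ renders $\Rcal_k$ controlled invariant, so $p_k^{\textup{can},T_S}(t)\in\Rcal_k\subset\Scal$ for all $t\ge t_{kB}$, and \Cref{assumption:budget_reset} gives $b(t)=b_{\textup{reset}}\le B$ for all $t\ge t_{kB}$ (using $B>b_{\textup{reset}}$, since the budget keeps jumping to $b_{\textup{reset}}$ while the state remains in $\Rcal_k$). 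This bridging of a finite-horizon check to an infinite-horizon guarantee is the crux, and it is precisely where \Cref{assumption:budget_reset} does the work.

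With the lemma in hand the induction is short. Assume the invariant holds at $k-1$. If $\Ical_k\neq\emptyset$, then $p_k^{\textup{com}}=p_k^{\textup{can},T_S^{*}}$ with $T_S^{*}=\max\Ical_k$ is a valid candidate, so the lemma yields the invariant at $k$. If $\Ical_k=\emptyset$, then $p_k^{\textup{com}}(t)=p_{k-1}^{\textup{com}}(t)$ on $[t_k,\infty)\subset[t_{k-1},\infty)$; since the executed state and budget have followed $p_{k-1}^{\textup{com}}$ over $[t_{k-1},t_k]$, the budget signal relevant at iteration $k$ is the tail of the one already certified at iteration $k-1$, so the invariant at $k$ is just the restriction of the invariant at $k-1$. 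Either way the invariant propagates, closing the induction.

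I expect the only delicacy beyond the backup-set argument to be the bookkeeping in the fallback branch about which budget signal $b(\cdot)$ is meant and why it is unchanged; note that the monotonicity relations $\Fcal_{k-1}\subset\Fcal_k$ and $\Rcal_{k-1}\subset\Rcal_k$ are not strictly needed for the stated claim (which concerns the true sets $\Scal$ and $\Rcal$), but they are what keeps the fallback trajectory \emph{valid} with respect to the updated known sets, so they matter for self-consistency of the scheme across iterations.
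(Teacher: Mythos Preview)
Your proposal is correct and follows essentially the same approach as the paper: induction on $k$, with the key content being that a valid candidate is safe on all of $[t_k,\infty)$ by splitting at $t_{kB}$ and invoking \Cref{assumption:budget_reset} for controlled invariance of $\Rcal_k$, and the fallback branch handled by restricting the inductive hypothesis. The only cosmetic difference is that you factor the ``valid $\Rightarrow$ infinite-horizon safe'' step into a standalone lemma, whereas the paper inlines this argument in both the base case and Case~1 of the induction.
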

\begin{proof}
    The proof by induction is based on Theorem 1 in~\cite{agrawal2024gatekeeper}.
    
    \textit{Base Case:} Let $k$ = 0. Since $p^{\textup{can},T_S}_0$ is valid, it is committed as $p_0^{\textup{com}} = p^{\textup{can},T_S}_0$. Then for the committed trajectory, \eqnN{&p_0^{\textup{com}}(t) \in \begin{cases}        \Fcal_0, \quad t \in [t_0, t_{0,B}) \\ \Rcal_0, \quad t \ge t_{0,B}  \end{cases} \\ \implies &p_0^{\textup{com}}(t)  \in \begin{cases} \Scal, \quad t \in [t_0, t_{0,B}) \\ \Scal, \quad t \ge t_{0,B}  \end{cases} \\ \iff &p_0^{\textup{com}}(t)  \in \Scal,\quad \forall t\ge t_0.} where $t_{0,B} = t_0+T_S+T_B$. We used the fact that $\Rcal_0$ is a backup set by \Cref{def:backup_set} and is thus controlled invariant. Hence, $p_0^{\textup{com}}$ satisfies all constraints for all time. 
    Then for the budget state solution to \eqref{eq:budget_hybrid} with $p(t)$ and the corresponding $u(t)$, \eqnN{b(t) \begin{cases}
        \le B, \quad t \in [t_0,t_{0B}) \\ = b_{\textup{reset}} \le B, \quad t\ge t_{0B}
    \end{cases}  \iff b(t) \le B, \; \forall t \ge t_0.} 
    
    \textit{Induction Step:} Now suppose the claim in \Cref{theorem} is true for some $k\in\naturals$. To show the claim holds for $k+1$, consider the two possible definitions for $p_{k+1}^{\textup{com}}$ from \Cref{def:com}:
    
    \textit{Case 1:} If $\Ical_{k+1} \ne \emptyset$, then \eqnN{p_{k+1}^{\textup{com}}(t) &= p_{k+1}^{\textup{can},T_S^*}(t) ~\forall t \ge t_{k+1} \\ 
    &\in  \begin{cases}
        \Fcal_{k+1}, \quad t \in [t_{k+1},t_{k+1,SB}) \\ \Rcal_{k+1}, \quad t \ge t_{k+1,SB}     \end{cases} \\ 
        & \in \Scal, \quad \forall t \ge t_{k+1}.
}

    \textit{Case 2:} If $\Ical_{k+1}=\emptyset$, the committed trajectory is \eqnN{p_{k+1}^{\textup{com}}(t) = p_{k}^{\textup{com}}(t) \in \Scal, \quad \forall t \ge t_{k+1}.}
    Therefore, $p_k^{\textup{com}}(t) \in \Scal ~\forall t \in [t_k,t_{k+1}), ~\forall k \in \naturals \implies p(t) \in \Scal~\forall t \ge t_0$. Moreover, the solution to \eqref{eq:budget_hybrid} with the corresponding $p(t)$ and $u(t)$ satisfies $b(t) \le B~\forall t \ge t_0$.
\end{proof}

We have shown the \gatekeeper{} framework guarantees constraint satisfaction when paired with a suitable backup policy that ensures all candidate trajectories reach $\Rcal_k$.
An open challenge is how to construct a general backup policy that applies to a wide range of nonlinear system dynamics and non-convex safety constraints. To address this challenge, we introduce an algorithm to efficiently construct backup trajectories as the robot gathers information in the environment.}

\subsection{\reroot{} for Backup Trajectory Construction}
{We leverage the well-known RRT* path planner in a novel manner to propose a sampling-based algorithm called \reroot{} that efficiently generates a dynamically feasible trajectory from any location to a backup set in which the budgeted resource is renewed. 

\subsubsection{Backup Trajectory Generation Problem}

Consider a valid candidate trajectory $p(t)$ and the corresponding budget state prediction $b(t)$. Let the pre-jump budget state be the one-sided limit as $t$ approaches $t_{kB}$ from below: \eqn{b^- = \lim_{t\nearrow t_{kB}}b(t),} where $t_{kB} = t_k+T_S+T_B$ is the finite time when $p$ reaches $\Rcal_k$. For any switch time $T_S$, the ideal backup trajectory is the one that minimizes the budget expenditure. The goal is to find such a backup trajectory to form the \gatekeeper{} candidate trajectory.

\begin{problem}
\label{prob:backup_prob}
    At the $k$-th iteration and for a given switch time $T_S$, given the known budget renewal set $\Rcal_k$ at time $t_k$, first return a set of dynamically feasible trajectories starting at $p_k^{\textup{can},T_S}(t_{kS})$ and reaching $\Rcal_k$ while remaining in the known safe set $\Fcal_k$. Select among them the trajectory $p^{\textup{bck},T_S}_k$ with minimal budget expenditure $b^-$.
\end{problem}
\begin{algorithm}[t]
\small
    \DontPrintSemicolon
    \caption{Main Autonomy Loop}
    \label{alg:autonomy}
    $\Gcal_k = \FuncSty{InitReRoot}(\Rcal_0)$\; \label{line:init_reroot}
    \For{$k \in [1,\dotsc, K]$}{
        $\Fcal_k, \Rcal_k \gets \FuncSty{UpdateFreeSpace}()$\;
        \If{$\Rcal_k \cap \Rcal_{k-1} \ne \emptyset$}{
            $\FuncSty{AddRootNodes}(\Gcal_k, \Rcal_k \cap \Rcal_{k-1})$\; \label{line:add_roots} 
        }
        $\FuncSty{GrowReRoot}(\Gcal_k, \Rcal_k,\Fcal_k, n_{\textup{update}})$\; \label{line:grow_reroot}
        $p_k^{\textup{com}} = \FuncSty{gatekeeper}(p_k^{\textup{nom}}, \Gcal_k, x(t_k),b(t_k))$\; \label{line:gk}
        Track $p_k^{\textup{com}}$\;
        Update $b(t)$ via \eqref{eq:budget_hybrid}\;
    }
\end{algorithm}

\subsubsection{\reroot{} (\underline{Re}verse \underline{Root}ed Forest)}

We build a graph $\Gcal_k = (\Vcal_k,\Ecal_k)$ with a set of nodes $\Vcal_k \subset \Xcal$ and a set of edges $\Ecal_k \subset \Vcal_k \times \Vcal_k$. The underlying algorithm for building $\Gcal_k$ is either the standard RRT* algorithm~\cite{karaman2011sampling}, or extensions of RRT* that attempt to generate kinodynamically feasible paths~\cite{perez2012lqr,webb2013kinodynamic}. The novelty in our approach is the manner of initialization and interpretation of the generated graph $\Gcal_k$. \Cref{alg:autonomy} details the overall autonomy algorithm that runs at each iteration $k$, including the novel \reroot{} initialization and growth steps. We summarize the key steps of \reroot{}:

\begin{itemize}
    \item \textbf{Root Nodes:} The graph $\Gcal_k$ is a rooted forest, i.e., a union of disjoint rooted trees where all root nodes are in the renewal set $\Rcal_k$. The initialization step, \Cref{line:init_reroot}, creates the set of root nodes $\Vcal_0$ in $\Rcal_0$. During the mission, further updates via \Cref{line:add_roots} add more root nodes to $\Vcal_k$ in the expanded renewal set $\Rcal_k$. The root nodes serve as the endpoints of the backup trajectories.

    \item \textbf{Online Growth:} In \Cref{line:grow_reroot} at each iteration $k$, the RRT* algorithm is used to add $n_{\rm update}$ nodes to $\Gcal_k$, including the standard rewiring step. The path cost function used in RRT* is the budget dynamics \eqref{eq:budget_hybrid} evaluated along the path. During the rewiring step, a node can switch trees, meaning that the path to the root of the new tree has a lower cost than the node's path to its previous root node. 

    \item \textbf{Path of Waypoints:} From any node $v \in \Vcal_k$ in the forest, there is a single path of waypoints $\Wcal(v)$ to a root node in $\Rcal_k$, which is not necessarily dynamically feasible. This path is used to generate a dynamically feasible trajectory by forward propagation of the closed-loop dynamics, which we describe below. This step occurs in \Cref{line:call_reroot} of Alg.~\ref{alg:gk}, called at each switching time $T_S$.
   \end{itemize}
    
    \begin{remark} Some popular variants of RRT/RRT* involve growing two trees with one root at the initial location and one at the goal location \cite{kuffner2000rrt,jordan2013optimal,qureshi2015intelligent}. By contrast, \reroot{} grows multiple trees by placing new roots as more components of the renewal set $\Rcal_k$ are discovered online. 
    \end{remark}

Next, we describe the in further detail the process of extracting a backup trajectory from the \reroot{} graph during the \gatekeeper{} candidate trajectory propagation step~(\Cref{line:call_reroot} of \Cref{alg:gk}).
\subsubsection{Backup Trajectory Generation}
Denote $d : \Vcal_k \to \Rnonneg$ as the depth of a node, which is the number of edges between $v$ and the root of its tree. A root node has depth $d = 0$. 
For $v \in \Vcal_k$, define \eqn{\Wcal(v) = \{v, \Pcal(v), \Pcal^2(v),\dotsc,\Pcal^d(v)\}} as the set of waypoints from $v$ to the root of its tree, where $\Pcal : \Vcal_k \to \Vcal_k$ is the parent node of $v$. Let $\Vcal_{\rm near}$ be the set of all nodes within a ball of radius $R>0$ of the desired starting location of the backup trajectory, that is, \eqn{\Vcal_{\rm near} = \{v\in\Vcal_k ~|~ \norm{v - p^{\textup{nom}}_k(t_{kS})} \le R\}.} Then, for each $v \in \Vcal_{\rm near}$, we generate the corresponding trajectory $(p_v, u_v)$ by propagating the closed loop trajectory of tracking the waypoints~$\Wcal(v)$ over the horizon $[t_{kS},t_{kB}]$: \eqn{\label{eq:back_gen}\dot p_v(t) &= f(p_v(t),u_v(t)), ~p_v(t_{kS}) = p^{\textup{nom}}_k(t_{kS}), \\ u_v(t) &= \pi^T(p_v(t),\Wcal(v))} where $\pi^T : \Xcal \times 2^\Vcal_k \to \Ucal$ is a waypoint tracking controller\footnote{The path is not required to be tracked perfectly. Safety violations of the propagated trajectory is checked in the validation step.}, e.g. see~\cite{breivik2005guidance,amer2017modelling,rubi2020survey}. $2^\Vcal_k$ is the power set of $\Vcal_k$, i.e., the set of all sets of $\Vcal_k$. This forms a set $\Ncal_k^{T_S} \subset \Phi(t_{kS},p_k^{\rm nom}(t_{kS}))$ of $|\Vcal_{\rm near}|$ trajectories. Additionally, let $b_v^-$ be the budget state expenditure over the trajectory $p_v$ by solving \eqref{eq:budget_hybrid}. Finally, the trajectory with minimal $b_v^-$ that is safe for all time is selected as the backup trajectory: \eqn{\label{eq:back_select}p^{\textup{bck},T_S}_k = \argmin_{(p_v, u_v) \in \Ncal_k^{T_S}}\{b_v^-  ~|~p_v(t) \in \Fcal_k, ~\forall t \ge t_{kS}\}.}  The resulting candidate trajectory $p^{\textup{can},T_S}_k$ is validated by \Cref{def:valid}, i.e., checking that $b^-\le B$ by forward propagating \eqref{eq:budget_hybrid} along $p^{\textup{can},T_S}_k$. If there does not exist a feasible backup trajectory for a given switching time $T_S$, then the \gatekeeper{} algorithm decrements $T_S$ until either a feasible backup trajectory is found or $T_S = 0$ and the previous committed trajectory $p_{k-1}^{\rm com}$ is selected as the next committed trajectory.

In summary, we solve \Cref{prob:backup_prob} by selecting the backup trajectory of minimal budget expenditure among a set of potential backup paths in the \reroot{} forest. Through the forward propagation in \eqref{eq:back_gen} and the validation step in \eqref{eq:back_select}, we show that the backup trajectory is dynamically feasible and safe by construction. Finally, the \gatekeeper{} committed validation step checks budget constraint satisfaction of the candidate trajectory.

\section{Simulation Case Study}
\label{section:experiments}
We evaluate our method in a simulation case study of a fixed-wing UAV flying in a GNSS-denied environment, using visual-inertial odometry to estimate its position. The horizontal planar dynamics are modeled as a Dubins vehicle:
\eqn{\label{eq:dubins} \dot{r}_N=V\cos\psi,~\dot{r}_E=V\sin\psi,~\dot{\psi}=u} where $[r_N, r_E]^\top \in \R^2$ is the horizontal position in a North-East-Down (NED) coordinate frame, $\psi \in [0,2\pi)$ is the heading, $V>0$ is a fixed velocity and $u$ is the input turn rate. $u$ is constrained by the minimum turn radius of the UAV, $\rho>0$, such that $u \in \Ucal = [-\frac{V}{\rho}, \frac{V}{\rho}]$. For simplicity and visualization purposes, we constrain the UAV to fly at a fixed altitude, but our method extends trivially to 3D trajectories, or for more complicated dynamics.

The UAV's velocity is $V=10$~m/s and minimum turn radius is $\rho=10$~m, which corresponds to a $\sim$45\textdegree{} bank angle. The sensor \ac{FOV} is defined by a radius of 60 m and an angle of 90\textdegree. The perception and mapping module runs at 5 Hz. The objective is to reach the randomly chosen goal location within the 250~m $\times$ 200~m mission domain. The mission requirements are as follows: \begin{itemize}
    \item \textbf{(Safety constraint)} Maintain at least $N_f=8$ visual features in the \ac{FOV} between successive camera frames for robust feature tracking.
    \item \textbf{(Budget constraint)} Maintain an absolute position error (error bound on position) below $B=9$ m.
\end{itemize}
Note that the safety constraint is difficult to write in equation form, e.g., $h(x)\le0$, and is not differentiable. It is however easy to verify algorithmically.
We assume there exist some visual landmarks that reset position error to zero when the UAV can see them. We assume that the position error is proportional to distance traveled, with an error rate of 3\%, i.e, 0.03 m of error per 1 m traveled, based on~\cite{ellingson2018relative}.

We write the safe set as \eqn{\Scal = \{x \in \Xcal~|~ \textup{FeaturesInFoV}(x) \ge N_f \textup{ or } x \in \Rcal\},} where $\textup{FeaturesInFoV} : \Xcal \to \naturals$ is the number of visible features at state $x$. We assume no prior knowledge of the feature locations. The UAV maps the features as it flies through the environment and builds $\Fcal_k$. The budget state models the growth of localization error as \eqn{\begin{cases}\dot{b} = 0.03V, \quad &x \notin \Rcal \\ b^+ = b_{\textup{reset}}=0, \quad &x \in \Rcal.\end{cases}}

We define the budget renewal set to have subsets $\Rcal^i$ at the starting location, a mid-point landmark (unknown to the UAV at the mission start time), and around the known goal location. Since the UAV cannot hover at a single location under the dynamics \eqref{eq:dubins}, we define $\Rcal$ as a circular counter-clockwise orbit of radius $\rho$ around the landmarks. $\Rcal$ is controlled invariant under the constant control input $u = -\frac{V}{\rho}$ and reachable from nearby states, i.e., a backup set. The definition for $\Rcal^i$ is as follows:
\eqn{\Rcal^i = \Big\{x \in \Xcal~|~(r_N-l_N^i)^2+(r_E-l_E^i)^2=\rho^2,\\ \psi = \arctan \left(\frac{r_E-l_E^i}{r_N-l_N^i} \right) - \frac{\pi}{2} \Big \},} where $l_N^i$ and $l_E^i$ are the north and east positions of the $i$-th landmark. All backup trajectories must end in a subset $\Rcal^i$, which means the UAV can orbit indefinitely while not accumulating localization error. 
\begin{figure}[t]
    \centering
    \includegraphics[width=0.9\linewidth]{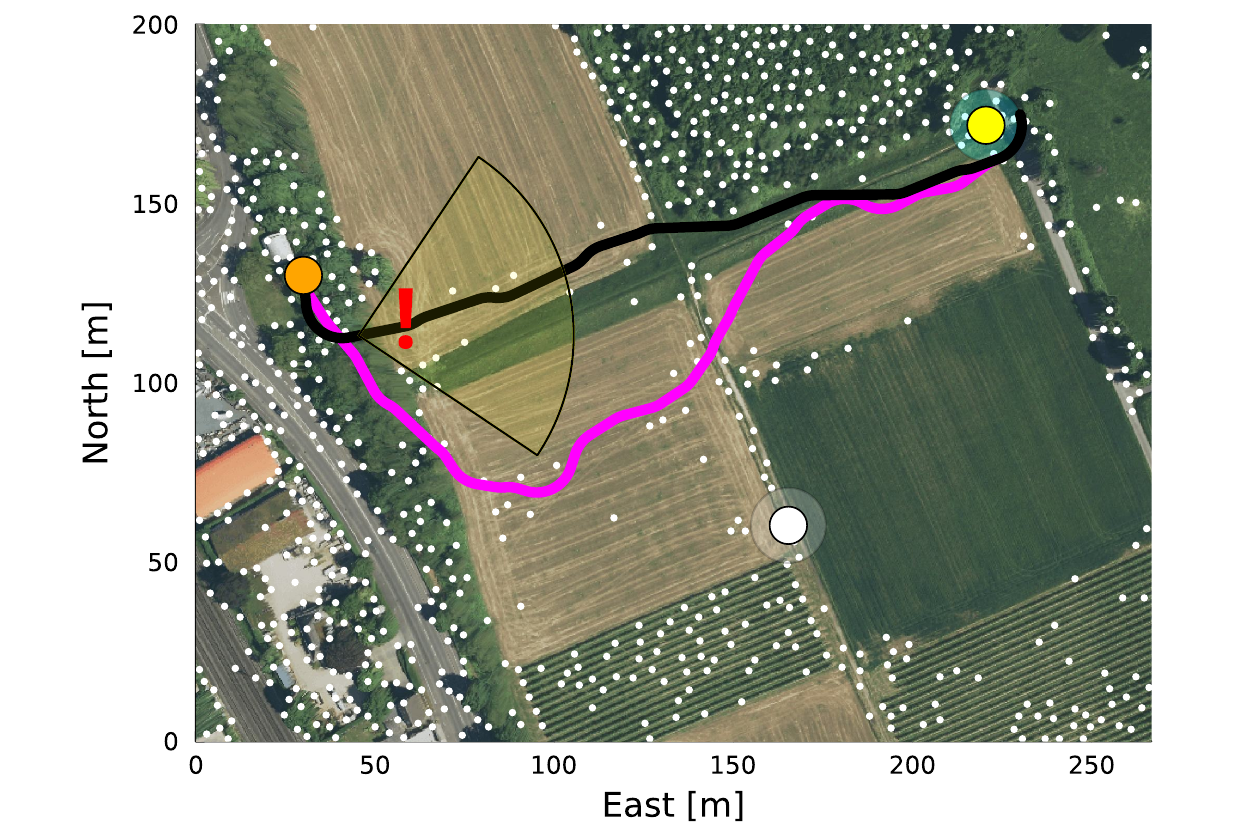}
    \caption{\small{Top-down view of field environment from the VPAIR database~\cite{schleiss2022vpair}. The white dots are visual odometry features. Note the lack of features in certain areas. The orange circle is the starting location. The yellow circle is the goal location. The white circle is a landmark. The nominal trajectory in black is the path of minimum distance, which becomes unsafe at the red ``!''. The magenta line is the \textit{omniscient} trajectory, i.e., has knowledge of all feature locations, that minimizes distance while satisfying the safety and budget constraints.}}
    \label{fig:omni}
    \vspace{-15pt}
\end{figure}
\begin{figure*}[t]
    \centering
    \includegraphics[width=\linewidth]{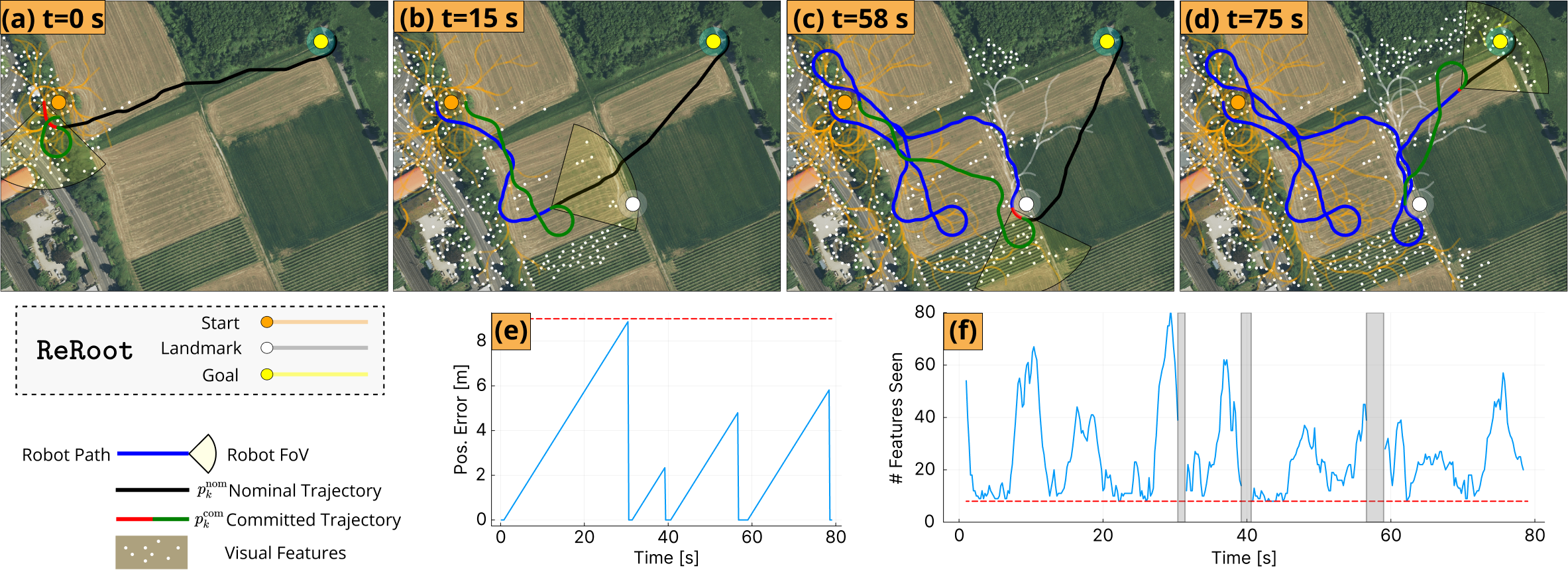}
    \caption{\small (a)-(d) Visualization of the simulated experiment of \gatekeeper{} with \reroot{} at various times in the mission. The features seen by the UAV are the white dots. The orange dot is the starting location, the gray dot is a landmark, and the yellow dot is the goal location. The colored thin lines are the branches of the \reroot{} forest. The blue line is the UAV's path up to time $t$. The black line is the unsafe part of the nominal trajectory, and the red line is the nominal component of the committed trajectory, which is replanned from the UAV position when the committed trajectory deviates from the nominal. The green line is the backup component of the committed trajectory and reaches a budget renewal set. (e) Value of the budget state $b$ (absolute position error) over time, which resets whenever the UAV reaches a budget renewal set. The budget never exceeds the maximum allowed value of 9 m. (f) Number of features in the \ac{FOV} over time, which is never below the minimum $N_f = 8$. The gray refers to when the UAV is at a landmark.}
    \label{fig:simulation}
    \vspace{-5pt}
\end{figure*}

\begin{figure*}
    \centering
    \includegraphics[width=\linewidth]{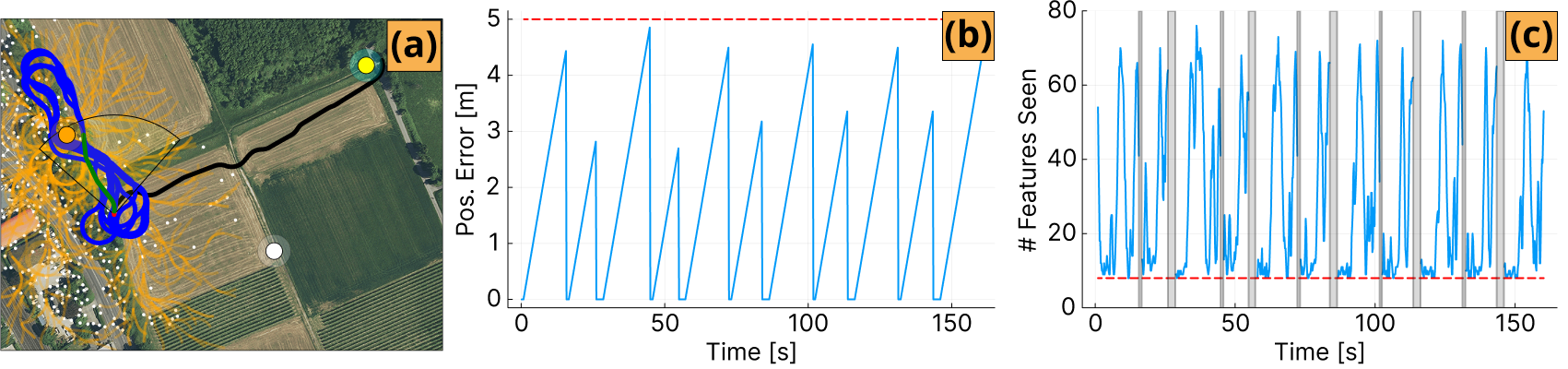}
    \caption{\small(a) Simulation result of the same setup as in \Cref{fig:simulation} but with a budget constraint of 5 m of localization error. (b) Despite never reaching the goal, the budget constraint is always satisfied and (c) the safety constraint is also always satisfied. A limit cycle-like behavior is observed, illustrating that \gatekeeper{} can successfully prevent the robot from leaving the safe set when the mission cannot be executed safely.}
    \label{fig:low_budget}
    \vspace{-15pt}
\end{figure*}

The simulated environment is a top-down image of a field taken from an airplane as part of the VPAIR database~\cite{schleiss2022vpair} and we create the set of features using Good Features to Track~\cite{shi1994good}, as shown in \Cref{fig:omni}. The mission domain is a 265 m $\times$ 200 m region. This environment presents a challenging setting for visual odometry, as certain regions of the field contain sparse or no discernible features. \gatekeeper{} with \reroot{} ensures that the UAV does not enter these unsafe regions. In the simulation with \gatekeeper{} and \reroot{}, the UAV has no knowledge of feature locations \textit{a priori} and maps them online when they enter the \ac{FOV}. The mid-field landmark is also unknown to the UAV and is discovered during the mission. We further discuss the trajectories in \Cref{fig:omni} in the results subsection below.

For the nominal planner, we use RRT* in a similar manner to the method in~\cite{ferguson2006replanning}, which facilitates replanning from any location in the map and returns a trajectory that is dynamically feasible for the Dubins dynamics \eqref{eq:dubins}.

We initialize \reroot{} around the starting location to form a small tree of safe backups that return to the initial orbit. Then, at 1 Hz, nodes are added based on the newly mapped features to extend the tree into the known safe set. As landmarks are discovered, a root node is placed and nearby nodes automatically connect to the new tree.

The Dubins dynamics in \eqref{eq:dubins} allows fast backup trajectory generation because the edges of \reroot{} represent dynamically feasible subpaths. Given two nodes $v_1,v_2 \in \Vcal_k$, the Dubins path of minimum distance is unique and is found algebraically \cite{shkel2001classification}. The path of waypoints $\Wcal(v)$ is dynamically feasible and the budget state $b(t)$ along the backup trajectory is the node cost of $v$, which is used when building \reroot{} to determine the parent of a new node. 

\Cref{fig:omni} depicts the nominal trajectory from the start to the goal, which becomes unsafe as there are fewer than $N_f=8$ features in the \ac{FOV}. We include for comparison the \textit{omniscient} trajectory that satisfies the safety and budget constraints in which the feature locations are known \textit{a priori}.

The code and animations of the simulation are available \href{https://github.com/dcherenson/budget-constrained-planning}{here}\footnote{https://github.com/dcherenson/budget-constrained-planning}. In the simulation, we observe that the UAV reaches the goal while satisfying all constraints in \Cref{fig:simulation}. \Cref{fig:simulation}a shows that the nominal trajectory across the sparse field is found to be unsafe, so a short backup trajectory is planned to return to its orbit. In \Cref{fig:simulation}b, the UAV has not mapped enough of the area around the mid-point landmark to reach it safely and must return to the starting location to reset the localization error. Then, in \Cref{fig:simulation}c, the UAV reaches the landmark using the \reroot{} tree rooted at the landmark. Finally, in \Cref{fig:simulation}d, the UAV is able to reach the goal location with a committed trajectory back to the landmark. \Cref{fig:simulation}e shows the position error over time which never exceeds the budget constraint. \Cref{fig:simulation}f shows the number of visible features over time, showing that the safety constraint is always satisfied. \Cref{tab:comp} shows the computation times of one iteration of \Cref{line:add_roots,line:grow_reroot} in \Cref{alg:autonomy} for \reroot{} and one iteration of \Cref{line:gk} for \gatekeeper{}.

\begin{table}[t]
    \centering
        \caption{Computation Times per Iteration}
        \label{tab:comp}
    \begin{tabular}{c|c|c} \hline
        \textbf{Component} & \textbf{Mean [ms] }& \textbf{Std. Dev. [ms]}  \\ \hline \hline
        \reroot{} & 2.90 & 7.17 \\ \hline
        \gatekeeper{} & 2.52 & 4.48 \\ \hline
    \end{tabular}
    \vspace{-20pt}
\end{table}

The behavior of the UAV trajectory in \Cref{fig:simulation} shows the exploratory effects of \gatekeeper{}. Since following the nominal trajectory to the goal is found to be unsafe, the UAV turns back towards the start. During the turn, it maps more features and is able to extend further into the newly mapped known safe set. Eventually, the budget constraint forces the return to the start to renew the accumulated error. Then, the nominal trajectory can be followed for a longer time as the known safe set has been mapped. This process continues until the UAV reaches the goal while mapping its environment. The shape of the resulting trajectory resembles the omniscient path in \Cref{fig:omni} with the addition of multiple return trips to landmarks.

We also simulate the mission with a lower, more challenging budget of $B=5$~m of accumulated localization error, which makes the task of reaching the goal while satisfying all constraints infeasible. In \Cref{fig:low_budget}a, the UAV is unable to reach the landmark or the goal without exceeding the limit. The resulting pattern is caused by the UAV following the nominal trajectory until \gatekeeper{} prevents the UAV from continuing to avoid exceeding the budget, then returning to the home landmark, and the cycle continues. \Cref{fig:low_budget}b and \Cref{fig:low_budget}c show that the safety and budget constraints are satisfied, despite this infinite looping behavior, which demonstrates the guaranteed constraint satisfaction.

\section{Conclusion}
In this paper, we proposed an architecture to guarantee safety and budget constraints throughout a mission in an environment where the safe set is built on-the-fly. 
The key contribution is \reroot{}, a sampling-based backup planning framework that augments the \gatekeeper{} architecture to construct backup policies, which are needed for guaranteeing constraint satisfaction. By growing multiple reverse RRT* trees rooted in renewal sets, \reroot{} efficiently generates trajectories that satisfy constraints while minimizing resource expenditure.
The efficacy of our approach was demonstrated in simulation with a case study of a fixed-wing UAV in a GNSS-denied environment navigating to a goal location. The safety constraint was to maintain a minimum of 8~visual features in the \ac{FOV} and the budget constraint was to limit the localization error to below 9~m. \gatekeeper{} with \reroot{} allowed the UAV to explore and map features while always satisfying the budget and safety constraints.

Future work could include extending our method of backup trajectory construction to time-varying safe sets, e.g., avoiding a dynamic obstacle, and time-varying budget renewal sets, e.g., a mobile charging station. Also of interest would be to incorporate multiple budget constraints into \reroot{} and form a set of Pareto optimal backup trajectories. The active budget constraint in the \gatekeeper{} iteration could then determine which backup to take.
{
\setstretch{0.95}
\bibliographystyle{IEEEtran}
\bibliography{biblio, IEEEabrv}
}

\end{document}